\ifcvprfinal\pagestyle{empty}\fi
\begin{document}

\title{Parsing Occluded People by Flexible Compositions}

\author{Xianjie Chen\\
University of California, Los Angeles\\
Los Angeles, CA 90095\\
{\tt\small cxj@ucla.edu}
\and
Alan Yuille\\
University of California, Los Angeles\\
Los Angeles, CA 90095\\
{\tt\small yuille@stat.ucla.edu}
}

\newcommand{\BL}{{\mathbf{l}}}
\newcommand{\BT}{{\mathbf{t}}}
\newcommand{\BI}{{\mathbf{I}}}
\newcommand{\BV}{{\mathbf{v}}}
\newtheorem{lemma}{Lemma}

\maketitle

\begin{abstract}
This paper presents an approach to parsing humans when there is significant occlusion. We model humans using a graphical model which has a tree structure building on recent work~\cite{yang2013Pami, chen_nips14} and exploit the connectivity prior that, even in presence of occlusion, the visible nodes form a connected subtree of the graphical model. We call each connected subtree a flexible composition of object parts. This involves a novel method for learning occlusion cues. During inference we need to search over a mixture of different flexible models. By exploiting part sharing, we show that this inference can be done extremely efficiently requiring only twice as many computations as searching for the entire object (\ie, not modeling occlusion). We evaluate our model on the standard benchmarked ``We Are Family" Stickmen dataset and obtain significant performance improvements over the best alternative algorithms.
\end{abstract}

\section{Introduction}

Parsing humans into parts is an important visual task with many applications such as activity recognition~\cite{wang2013approach,yao2012coupled}. A common approach is to formulate this task in terms of graphical models where the graph nodes and edges represent human parts and their spatial relationships respectively. This approach is becoming successful on benchmarked datasets~\cite{yang2013Pami,chen_nips14}. But in many real world situations many human parts are occluded. Standard methods are partially robust to occlusion by, for example, using a latent variable to indicate whether a part is present and paying a penalty if the part is not detected, but are not designed to deal with significant occlusion. One of these models~\cite{chen_nips14} will be used in this paper as a \textit{base model}, and we will compare to it.

\begin{figure}
\centering
\includegraphics[width=\linewidth,trim=10 10 10 10,clip=true]{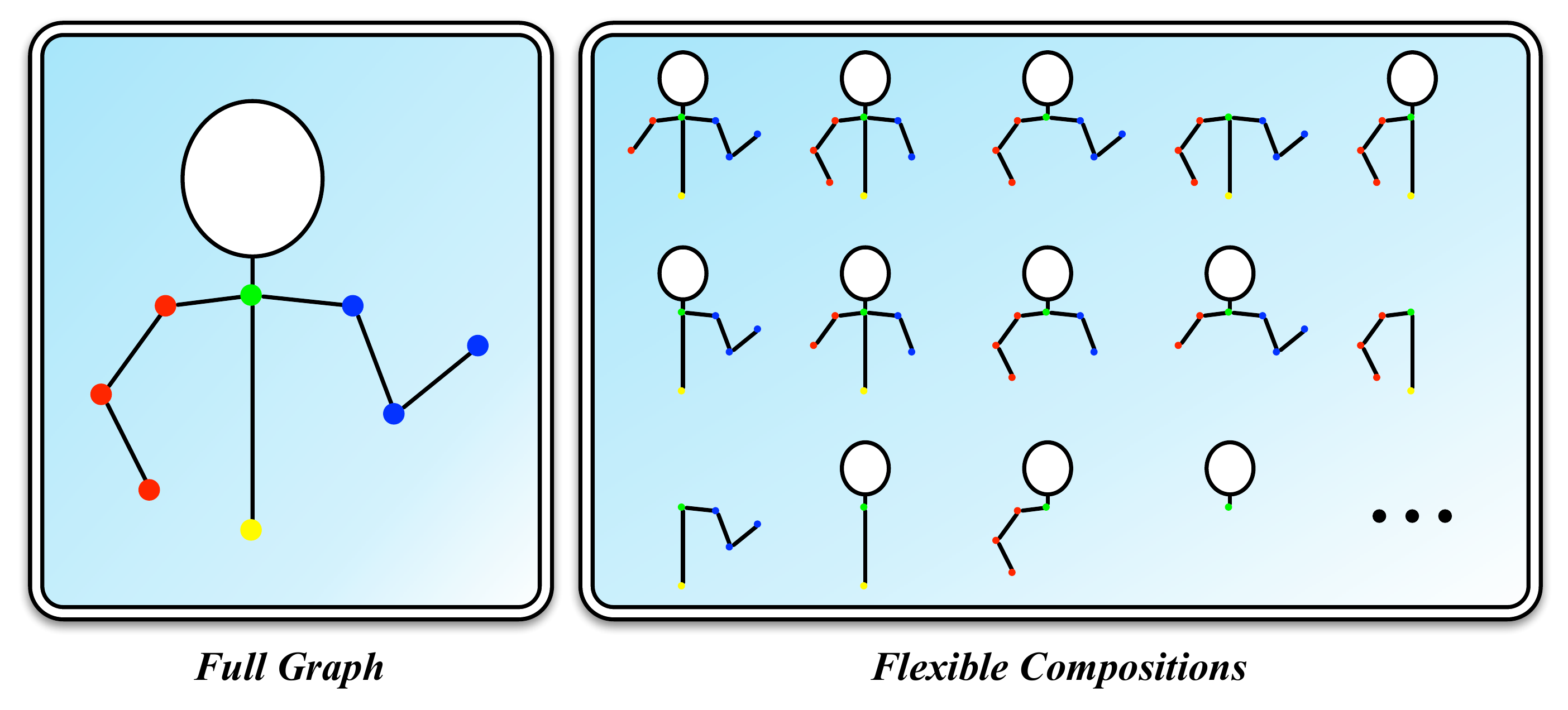}
\caption{An illustration of the \textit{flexible compositions}. Each connected subtree of the \textit{full graph} (include the full graph itself) is a flexible composition. The flexible compositions that do not have certain parts are suitable for the people with those parts occluded.}
\label{fig:model}
\end{figure}

In this paper, we observe that part occlusions often occur in regular patterns. The visible parts of a human tend to consist of a subset of connected parts even when there is significant occlusion (see Figures~\ref{fig:model} and ~\ref{fig:motivation}(a)). In the terminology of graphical models, the visible (non-occluded) nodes form a connected subtree of the full graphical model (following current models, for simplicity, we assume that the graphical model is treelike). This \emph{connectivity prior} is not always valid (\ie, the visible parts of humans may form two or more connected subsets), but our analysis (see Section~\ref{sec:diag}) suggests it's often true. In any case, we will restrict ourselves to it in this paper, since verifying that some isolated pieces of body parts belong to the same person is still very difficult for vision methods, especially in challenging scenes where multiple people occlude one another (see Figure~\ref{fig:motivation}).

To formulate our approach we build on the base model~\cite{chen_nips14}, which is the state of the art on several benchmarked datasets~\cite{Johnson10, sapp2013modec,ferrari2008progressive}, but is not designed for dealing with significant occlusion. We explicitly model occlusions using the connectivity prior above. This means that we have a mixture of models where the number of components equals the number of \emph{all} the possible connected subtrees of the graph, which we call \textit{flexible compositions}, see Figure~\ref{fig:model}. The number of flexible compositions can be large (for a simple chain like model consisting of $N$ parts, there are $N(N+1)/2$ possible compositions). Our approach exploits the fact there is often local evidence for the presence of occlusions, see Figure~\ref{fig:motivation}(b). We propose a novel approach which learns occlusion cues, which can break the links/edges, between adjacent parts in the graphical model. It is well known, of course, that there are local cues such as T-junctions which can indicate local occlusions. But although these occlusion cues have been used by some models (\eg, \cite{coughlan2000efficient,tu2006parsing}), they are not standard in graphical models of objects.

\begin{figure*}[ht]
\centering
\includegraphics[width=\linewidth,trim=10 10 10 10,clip=true]{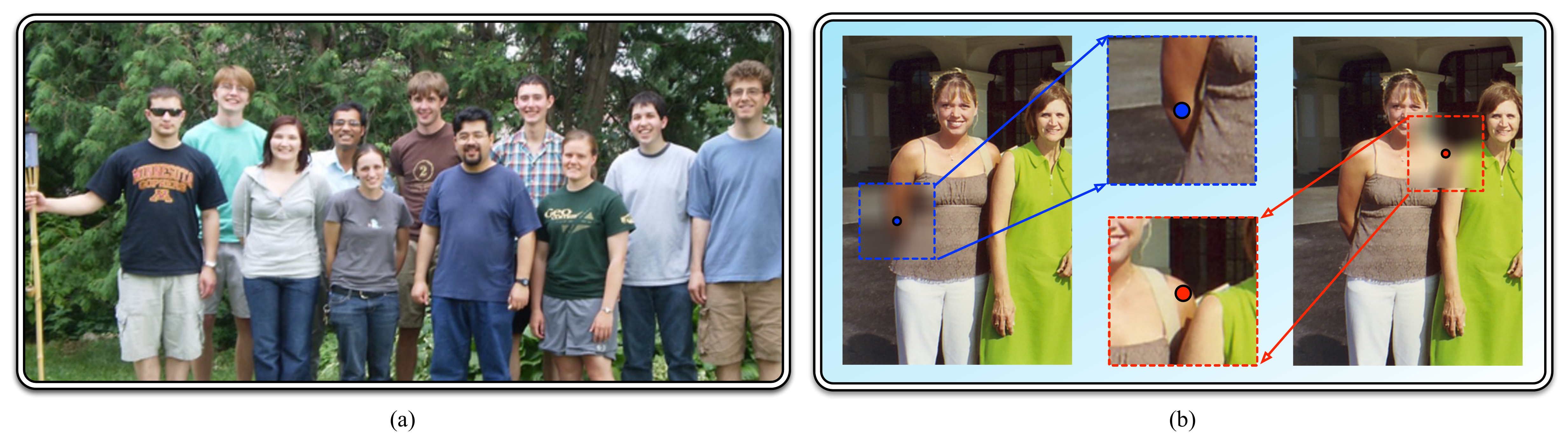}
\caption{Motivation. (a): In real world scenes, people are usually significantly occluded (or truncated). Requiring the model to localize a fixed set of body parts while ignoring the fact that different people have different degrees of occlusion (or truncation) is problematic. (b): The absence of body parts evidence can help to predict occlusion, \eg, the right wrist of the lady in brown can be inferred as occluded because of the absence of suitable wrist near the elbow. However, absence of evidence is not evidence of absence. It can fail in some challenging scenes, for example, even though the left arm of the lady in brown is completely occluded, there is still strong image evidence of suitable elbow and wrist at the plausible locations due to the confusion caused by nearby people (\eg, the lady in green). In both situations, the local image measurements near the occlusion boundary (\ie, around the right elbow and left shoulder), \eg, in a image patch, can reliably provide evidence of occlusion.}
\label{fig:motivation}
\end{figure*}

We show that efficient inference can be done for our model by exploiting the sharing of computation between different flexible models. Indeed, the complexity is only doubled compared to recent models where occlusion is not explicitly modeled. This rapid inference also enables us to train the model efficiently from labeled data.

We illustrate our algorithm on the standard benchmarked ``We Are Family" Stickmen (WAF) dataset~\cite{eichner2010we} for parsing humans when significant occlusion is present. We show strong performance with significant improvement over the best existing method~\cite{eichner2010we} and also outperform our base model~\cite{chen_nips14}. We perform diagnostic experiments to verify our connectivity prior that the visible parts of a human tend to consist of a subset of connected parts even when there is significant occlusion, and quantify the effect of different aspects of our model.


\section{Related work}
\label{sec:related}
Graphical models of objects have a long history~\cite{fischler1973representation,felzenszwalb2005pictorial}. Our work is most closely related to the recent work of Yang and Ramanan~\cite{yang2013Pami}, Chen and Yuille~\cite{chen_nips14}, which we use as our base model and will compare to. Other relevant work includes~\cite{pishchulin2013poselet, sapp2010adaptive, ferrari2008progressive, tompson2014joint}.

Occlusion modeling also has a long history~\cite{hsiao2012occlusion, Dollar2012PAMI}. Psychophysical studies (\eg, Kanizsa~\cite{kanizsa1979organization}) show that T-junctions are a useful cue for occlusion. But there has been little attempt to model the spatial patterns of occlusions for parsing objects. Instead it is more common to design models so that they are robust in the presence of occlusion, so that the model is not penalized very much if an object part is missing. Girshick et. al.~\cite{girshick2011object} and Supervised-DPM~\cite{azizpour2012object} model the occluded part (background) using extra templates. And they rely on a root part (\ie, the holistic object) that never takes the status of ``occluded". When there is significant occlusion, modeling the root part itself is difficult. Ghiasi et. al.~\cite{Ghiasi14_people} advocates modeling the occlusion area (background) using more templates (mixture of templates), and localizes every body parts. It may be plausible to ``guess" the occluded keypoints of face (\eg, ~\cite{burgos2013robust, Ghiasi14_face}), but seems impossible for body parts of people, due to highly flexible human poses. Eichner and Ferrari~\cite{eichner2010we} handles occlusion by modeling interactions between people, which assumes the occlusion is due to other people. 

Our approach models object occlusion effectively uses a mixture of models to deal with different occlusion patterns. There is considerable work which models objects using mixtures to deal with different configurations, see Poselets~\cite{BourdevMalikICCV09} which uses many mixtures to deal with different object configurations, and deformable part models (DPMs)~\cite{felzenszwalb2010object} where mixtures are used to deal with different viewpoints.

To ensure efficient inference, we exploit the fact that parts are shared between different flexible compositions.  This sharing of parts has been used in other work, e.g., ~\cite{chen_cvpr14}. Other work that exploits part sharing includes compositional models~\cite{zhu2010part} and AND-OR graphs~\cite{zhu2008max, zhu2006stochastic}.

\section{The Model}
\label{sec:model}
We represent human pose by a graphical model $\mathcal{G} = (\mathcal{V}, \mathcal{E})$ where the nodes $\mathcal{V}$ corresponds to the parts (or joints) and the edges $\mathcal{E}$ indicate which parts are directly related. For simplicity, we impose that the graph structure forms a $K-$node tree, where $K = |\mathcal{V}|$. The pixel location of part part $i$ is denoted by $\BL_i = (x, y)$, for $i \in \left\{1, \dots, K\right\}$. 

To model the spatial relationship between neighboring parts $(i,j) \in \mathcal{E}$, we follow the base model~\cite{chen_nips14} to discretize the pairwise spatial relationships into a set indexed by $t_{ij}$, which corresponds to a mixture of different spatial relationships. 

In order to handle people with different degrees of occlusion, we specify a binary occlusion decoupling variable $\gamma_{ij} \in \left\{0, 1\right\}$ for each edge $(i,j) \in \mathcal{E}$, which enables the subtree $\mathcal{T}_j = (\mathcal{V}(\mathcal{T}_j ), \mathcal{E}(\mathcal{T}_j))$ rooted at part $j$ to be decoupled from the graph at part $i$ (the subtree does not contain part $i$, \ie, $i \notin \mathcal{V}(\mathcal{T}_j )$). This results in a set of flexible compositions of the graph, indexed by set $\mathcal{C}_\mathcal{G}$. These compositions share the nodes and edges with the full graph $\mathcal{G}$ and each of themselves forms tree graph (see Figure~\ref{fig:model}). The compositions that do not have certain parts are suitable for the people with those parts occluded. 

In this paper, we exploit the connectivity prior that body parts tend to be connected even in the presence of occlusion, and do not consider the cases when people are separated into isolated pieces, which is very difficult. Handling these cases typically requires non-tree models, \eg,\cite{chen_cvpr14}, and thus does not have exact and efficient inference algorithms. Moreover, verifying whether some isolated pieces of people belong to the same person is still very difficult for vision methods, especially in challenging scenes where multiple people usually occlude one another (see Figure~\ref{fig:motivation}(a)).

For each flexible composition $\mathcal{G}_c = (\mathcal{V}_c, \mathcal{E}_c), c \in \mathcal{C}_\mathcal{G}$, we will define a score function $F(\BL, \BT, \mathcal{G}_c | \BI, \mathcal{G})$ as a sum of appearance terms, pairwise relational terms, occlusion decoupling terms and decoupling bias terms. Here $\BI$ denotes the image, $\BL=\left\{\BL_i | i\in \mathcal{V} \right\}$ is the set of locations of the parts, and $\BT = \left\{\BT_{ij}, \BT_{ji} | (i,j) \in \mathcal{E}  \right\} $ is the set of spatial relationships.

\noindent\textbf{Appearance Terms:}
The appearance terms make use of the local image measurement within patch $\BI(\BL_i)$ to provide evidence for part $i$ to lie at location $\BL_i$. They are of form:
\begin{equation}
A(\BL_i | \BI) =  w_i \phi(i | \BI(\BL_i); \boldsymbol\theta), 
\end{equation}
\noindent where $\phi(. | .; \boldsymbol\theta)$ is the (scalar-valued) appearance term with $\boldsymbol\theta$ as its parameters (specified in Section~\ref{sec:dcnn_id}), and $w_i$ is a scalar weight parameter.

\noindent\textbf{Image Dependent Pairwise Relational (IDPR) Terms:}
We follow the base model~\cite{chen_nips14} to use image dependent pairwise relational (IDPR) terms, which gives stronger spatial constraints between neighboring parts $(i,j) \in \mathcal{E}$. Stronger spatial constraints reduce the confusion from the nearby people and clustered background, which helps to better infer occlusion.

More formally, the relative positions between parts $i$ and $j$ are discretized into several types $t_{ij} \in \left\{ 1, \dots, T_{ij} \right\}$ (\ie, a mixture of different relationships) with corresponding mean relative positions $\mathbf{r}_{ij}^{t_{ij}}$ plus small deformations which are modeled by the standard quadratic deformation term. They are given by:
\begin{equation}
\begin{aligned}
R(\BL_i, \BL_j, t_{ij}, t_{ji} | \BI) &= \langle \mathbf{w}_{ij}^{t_{ij}}, \boldsymbol \psi(\BL_j-\BL_i - \mathbf{r}_{ij}^{t_{ij}}) \rangle 
\\
&+ w_{ij}  \varphi^s (t_{ij},\gamma_{ij}=0 | \BI(\BL_i); \boldsymbol\theta) 
\\
&+\langle \mathbf{w}_{ji}^{t_{ji}}, \boldsymbol\psi(\BL_i - \BL_j - \mathbf{r}_{ji}^{t_{ji}}) \rangle 
\\
&+ w_{ji} \varphi^s (t_{ji},\gamma_{ji}=0 | \BI(\BL_j); \boldsymbol\theta)
\end{aligned},
\label{eq:pairwise}
\end{equation}
\noindent where $\boldsymbol\psi(\Delta \BL=[ \Delta x, \Delta y]) = [\Delta x \; \Delta x^2 \; \Delta y \; \Delta y^2]^{\intercal}$ are the standard quadratic deformation features, $\varphi^s(.,\gamma_{ij}=0 | .; \boldsymbol\theta)$ is the Image Dependent Pairwise Relational (IDPR) term with $\boldsymbol\theta$ as its parameters (specified in Section~\ref{sec:dcnn_id}). IDPR terms are only defined when both part $i$ and $j$ are visible (\ie, $\gamma_{ij} = 0$ and $\gamma_{ji} = 0$). Here $\mathbf{w}_{ij}^{t_{ij}},w_{ij}, \mathbf{w}_{ji}^{t_{ji}},w_{ji}$ are the weight parameters, and the notation $\langle .,. \rangle$ specifies dot product and boldface indicates a vector. 

\noindent\textbf{Image Dependent Occlusion Decoupling (IDOD) Terms:}
These IDOD terms capture our intuition that the visible part $i$ near the occlusion boundary (and thus is a leaf node in each flexible composition) can reliably provide occlusion evidence using \emph{only} local image measurement (see Figure~\ref{fig:motivation}(b) and Figure~\ref{fig:idpr_idod}). More formally, the occlusion decoupling score for decoupling the subtree $\mathcal{T}_j$ from the full graph at part $i$ is given by: 
\begin{equation}
D_{ij}(\gamma_{ij} = 1, \BL_i | \BI) = w_{ij} \varphi^d(\gamma_{ij}=1 | \BI(\BL_i); \boldsymbol\theta),
\label{eq:idod}
\end{equation}
\noindent where $\varphi^d (\gamma_{ij}=1|. ; \boldsymbol\theta)$ is the Image Dependent Occlusion Decoupling (IDOD) term with $\boldsymbol\theta$ as its parameters (specified in Section~\ref{sec:dcnn_id}), $\gamma_{ij} = 1$ indicates subtree $\mathcal{T}_j$ is decoupled from the full graph. Here $w_{ij}$ is the scalar weight parameter shared with the IDPR term.

\noindent\textbf{Decoupling Bias Term:}
The decoupling bias term captures our intuition that the absence of evidence for suitable body part can help to predict occlusion. We specify a scalar bias term $b_i$ for each part $i$ as a learned measure for the absence of good part appearance, and also the absence of suitable spatial coupling with neighboring parts (our spatial constraints are also image dependent).

The decoupling bias term for decoupling the subtree $\mathcal{T}_j = (\mathcal{V}(\mathcal{T}_j ), \mathcal{E}(\mathcal{T}_j))$ from the full graph at part $i$, is defined as the sum of all the bias terms associated with the parts in the subtree, \ie, $k \in \mathcal{V}(\mathcal{T}_j )$. They are of form:
\begin{equation}
B_{ij} = \sum_{k \in \mathcal{V}(\mathcal{T}_j )}b_k
\end{equation}

\noindent\textbf{The Model Score:}
The model score for a person is the maximum score of all the flexible compositions $c \in \mathcal{C}_\mathcal{G}$, therefore the index $c$ of the flexible composition is also a random variable that need to be estimated, which is different from the standard graphical models with single graph structure.

The score $F(\BL, \BT, \mathcal{G}_c | \BI, \mathcal{G})$ for each flexible composition $c \in \mathcal{C}_\mathcal{G}$ is a function of the locations $\BL$, the pairwise spatial relation types $\BT$, the index of the flexible composition $c$, the structure of the full graph $\mathcal{G}$, and the input image $\BI$. It is given by:
\begin{equation}
\begin{aligned}
F(\BL, \BT, \mathcal{G}_c | \BI, \mathcal{G}) &= \sum_{i \in \mathcal{V}_c} A(\BL_i | \BI) 
\\
&+ \sum_{(i,j) \in \mathcal{E}_c} R(\BL_i, \BL_j, t_{ij}, t_{ji} | \BI) 
\\
&+ \sum_{(i,j)\in\mathcal{E}_c^d} ( B_{ij} + D_{ij}(\gamma_{ij} = 1, \BL_i  | \BI) ) 
\end{aligned}
\label{eq:full}
\end{equation}
\noindent where $\mathcal{E}_c^d = \left\{ (i, j)\in\mathcal{E} | i \in \mathcal{V}_c, j \notin \mathcal{V}_c \right\}$ is the edges that is decoupled to generate the composition $\mathcal{G}_c$. See Section~\ref{sec:learn} for the learning of the model parameters.

\subsection{Deep Convolutional Neural Network (DCNN) for Image Dependent Terms}
\label{sec:dcnn_id}
\begin{figure}
\centering
\includegraphics[width=\linewidth,trim=10 10 10 10,clip=true]{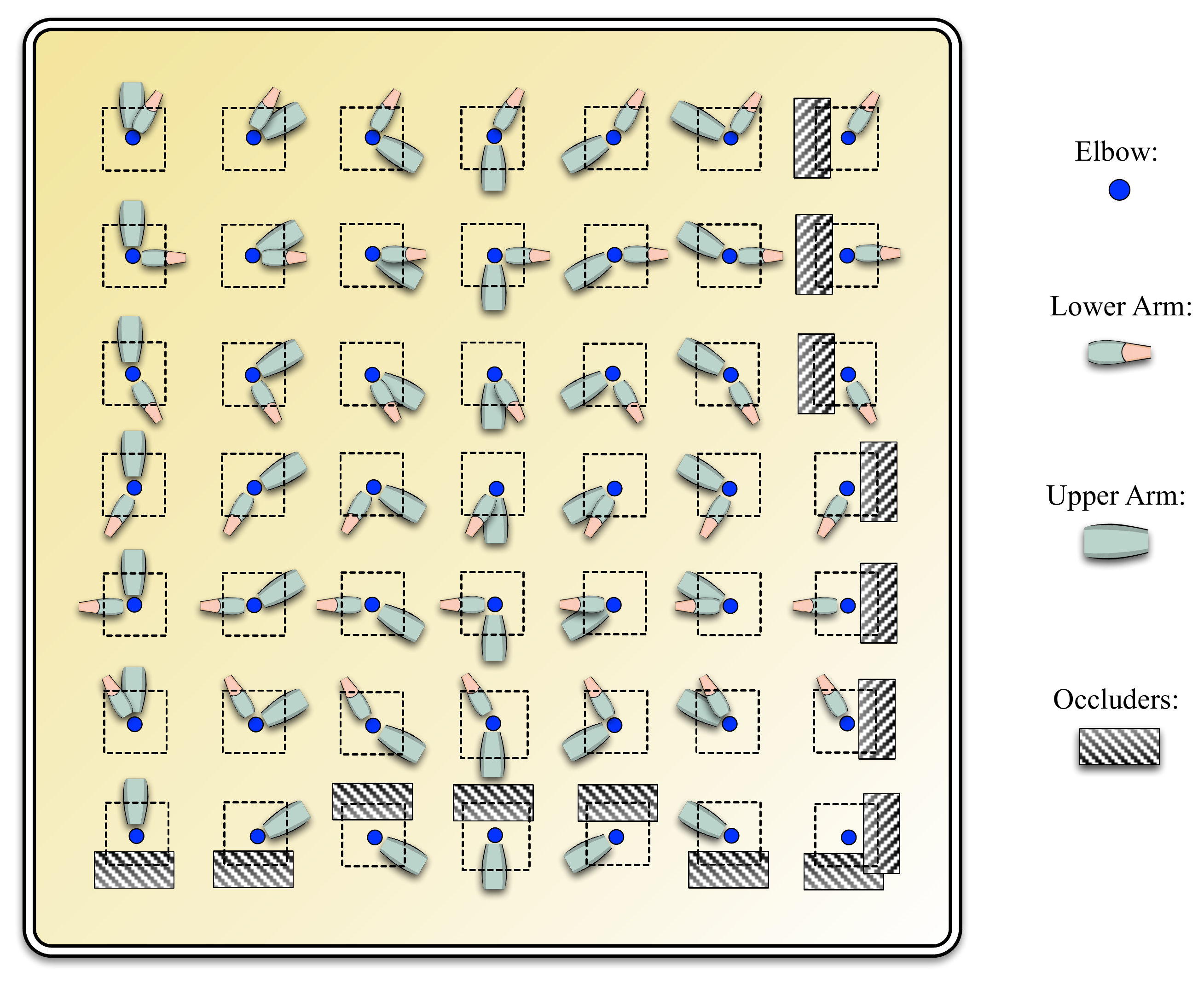}
\caption{Different occlusion decoupling and spatial relationships between the elbow and its neighbors, \ie, wrist and shoulder. The local image measurement around a part (\eg, the elbow) can reliably predict the relative positions of its neighbors when they are not occluded, which is demonstrated in the base model~\cite{chen_nips14}. In the case when the neighboring parts are occluded, the local image measurement can also reliably provide evidence for the occlusion.}
\label{fig:idpr_idod}
\end{figure}

Our model has three kinds of terms that depend on the local image patches: the appearance terms, IDPR terms and IDOD terms. This requires us to have a method that can efficiently extract information from a local image patch $\BI(\BL_i)$ for the presence of the part $i$, as well as the occlusion decoupling evidence $\gamma_{ij}=1$ of its neighbors $j \in \mathcal{N}(i)$, where $j \in \mathcal{N}(i)$ if, and only if, $(i,j) \in \mathcal{E}$. When a neighboring part $j$ is not occluded, \ie $\gamma_{ij}=0$, we also need to extract information for the pairwise spatial relationship type $t_{ij}$ between parts $i$ and $j$.

Extending the base model~\cite{chen_nips14}, we learn the distribution for the state variables $i, t_{ij}, \gamma_{ij}$ conditioned on the image patches $\BI(\BL_i)$. We'll first define the state space of this distribution. 

Let $g$ be the random variable that denotes which part is present, \ie, $g=i$ for part $i \in \left\{1,...,K\right\}$ or $g=0$ if no part is present (\ie, the background). We define $\mathbf{m}_{g \mathcal{N}(g)}=\left\{ m_{gk} | k \in \mathcal{N}(g)\right\}$ to be the random variable that determines the pairwise occlusion decoupling and spatial relationships between part $g$ and all its neighbors $\mathcal{N}(g)$, and takes values in $\mathcal{M}_{g \mathcal{N}(g)}$. If part $g=i$ has one neighbor $j$ (\eg, the wrist), then $\mathcal{M}_{i \mathcal{N}(i)} = \left\{0, 1, \dots, T_{ij}\right\}$, where the value $0$ represents part $j$ is occluded, \ie, $\gamma_{ij} = 1$ and the other values $v \in \mathcal{M}_{i \mathcal{N}(i)}$ represent part $j$ is not occluded and has corresponding spatial relationship types with part $i$, \ie, $\gamma_{ij} = 0, t_{ij}=v$. If $g=i$ has two neighbors $j$ and $k$ (\eg, the elbow), then $\mathcal{M}_{i \mathcal{N}(i)} = \left\{0, 1, \dots, T_{ij}\right\} \times \left\{0, 1, \dots, T_{ik}\right\}$ (Figure~\ref{fig:idpr_idod} illustrates the space $\mathcal{M}_{i \mathcal{N}(i)}$ for the elbow when $T_{ik}=T_{ij}=6$). If $g=0$, then we define $\mathcal{M}_{0 \mathcal{N}(0)} = \left\{0\right\}$. 

The full space can be written as:
\begin{equation}
\mathcal{U} = \cup_{g=0}^K \left\{ g \right\}\times \mathcal{M}_{g \mathcal{N}(g)}
\end{equation}
The size of the space is $|\mathcal{U}| = \sum_{g=0}^K |\mathcal{M}_{g \mathcal{N}(g)} |$. Each element in this space corresponds to the background or a part with a kind of occlusion decoupling configurations of all its neighbors and the types of its pairwise spatial relationships with its visible neighbors.

With the space of the distribution defined, we use a single Deep Convolutional Neural Network (DCNN)~\cite{krizhevsky2012imagenet}, which is efficient and effective for many vision tasks~\cite{zhang2014panda, girshick2014rich, Jay_George15_ICLR}, to learn the conditional probability distribution $p(g, \mathbf{m}_{g \mathcal{N}(g)} | \BI(\BL_i); \boldsymbol\theta)$. See Section~\ref{sec:learn} for more details.

We specify the appearance terms $\phi(. | .;\boldsymbol\theta)$, IDPR terms $\varphi^s(.,\gamma_{ij}=0 | .; \boldsymbol\theta)$ and IDOD terms $\varphi^d(\gamma_{ij}=1 | .; \boldsymbol\theta)$ in terms of $p(g, \mathbf{m}_{g \mathcal{N}(g)} | \BI(\BL_i); \boldsymbol\theta)$ by marginalization:
\begin{align}
\phi(i | \BI(\BL_i);\boldsymbol\theta) &= \log( p(g = i | \BI(\BL_i); \boldsymbol\theta) ) \\
\varphi^s(t_{ij},\gamma_{ij}=0 | \BI(\BL_i); \boldsymbol\theta) &= \log( p(m_{ij} = t_{ij} | g = i, \BI(\BL_i); \boldsymbol\theta) ) \\
\varphi^d(\gamma_{ij} =1 | \BI(\BL_i); \boldsymbol\theta) &= \log( p(m_{ij} = 0 | g = i, \BI(\BL_i); \boldsymbol\theta) )
\label{eq:marginalization}
\end{align}

\section{Inference}
\label{sec:inference}
To estimate the optimal configuration for each person, we search for the flexible composition $c \in \mathcal{C}_{\mathcal{G}}$ with the configurations of the locations $\BL$ and types $\BT$ that maximize the model score: $(c^*, {\BL^*}, {\BT^*}) = \arg \max_{c, \BL_, \BT} F(\BL, \BT, \mathcal{G}_c | \BI, \mathcal{G}) $.

Let $\mathcal{C}_{\mathcal{G}}^i \subset \mathcal{C}_{\mathcal{G}}$ be the subset of the flexible compositions that have node $i$ present (Obviously, $\cup_{i \in \mathcal{V}}\mathcal{C}_{\mathcal{G}}^i = \mathcal{C}_{\mathcal{G}}$), and we will consider the compositions that have the part with index $1$ present first, \ie, $\mathcal{C}_{\mathcal{G}}^1$. 

For all the flexible compositions $c \in \mathcal{C}_{\mathcal{G}}^1$, we set part $1$ as root. We will use dynamic programming to compute the best score over all these flexible compositions for each root location $\BL_1$. 

After setting the root, let $\mathcal{K}(i)$ be the set of children of part $i$ in the full graph ($\mathcal{K}(i) = \emptyset$, if part $i$ is a leaf). We use the following algorithm to compute the maximum score of all the flexible compositions $c \in \mathcal{C}_{\mathcal{G}}^1$: 
\begin{align}
S_i(\BL_i | \BI) &= A(\BL_i | \BI) + \sum_{k\in\mathcal{K}(i)} m_{ki}(\BL_i | \BI)
\label{eq:dp_score}
\\
B_{ij} &= b_j + \sum_{k\in\mathcal{K}(j)}B_{jk}
\\
m_{ki}(\BL_i | \BI) &= \max_{\gamma_{ik}}( (1-\gamma_{ik}) \times m_{ki}^{s}(\BL_i | \BI) \nonumber \\
&+ \gamma_{ik} \times m_{ki}^{d}(\BL_i | \BI) )
\label{eq:dp_message}
\\
m_{ki}^{s}(\BL_i | \BI) &= \max_{\BL_k, t_{ik}, t_{ki}} R(\BL_i, \BL_k, t_{ik}, t_{ki} | \BI) + S_k(\BL_k | \BI)  
\label{eq:dp_message_s}
\\
m_{ki}^{d}(\BL_i | \BI) &= D_{ik}(\gamma_{ik} = 1, \BL_i  | \BI) + B_{ik}
\label{eq:dp_message_d},
\end{align}
\noindent where $S_i(\BL_i | \BI)$ is the score of the subtree $\mathcal{T}_i$ with part $i$ each location $\BL_i$, and is computed by collecting the messages from all its children $k\in\mathcal{K}(i)$. Each child computes two kinds of messages $m_{ki}^s(\BL_i | \BI)$ and $m_{ki}^{d}(\BL_i | \BI)$ that convey information to parent for deciding whether to decouple it (and its followed subtree), \ie, Equation~\ref{eq:dp_message}. 

Intuitively, the message computed by Equation~\ref{eq:dp_message_s} measures how well we can find a child part $k$ that not only shows strong evidence of part $k$ (\eg, an elbow) and couples well with the other parts in the subtree $\mathcal{T}_k$ (\ie, $S_k(\BL_k | \BI)$), but also is suitable for the part $i$ at location $\BL_i$ based on the local image measurement (encoded in the IDPR terms). The message computed by Equation~\ref{eq:dp_message_d} measures the evidence to decouple $\mathcal{T}_k$ by combining the local image measurements around part $i$ (encoded in IDOD terms) and the learned occlusion decoupling bias. 

The following lemma states each $S_i(\BL_i | \BI)$ computes the maximum score for the set of flexible compositions $\mathcal{C}_{\mathcal{T}_i}^i$ that is within the subtree $\mathcal{T}_i$ and have part $i$ at $\BL_i$. In other words, we consider an object that is only composed with the parts in the subtree $\mathcal{T}_i$ (\ie, $\mathcal{T}_i$ is the full graph) and $\mathcal{C}_{\mathcal{T}_i}^i$ is the set of flexible compositions of the graph $\mathcal{T}_i$ that have part $i$ present. Since at root part (\ie, i = 1), we have $\mathcal{T}_1 = \mathcal{G}$, once the messages are passed to the root part,  $S_1(\BL_1 | \BI)$ gives the best score for all the flexible compositions in the full graph $c \in \mathcal{C}_{\mathcal{G}}^1$ that have part $1$ at $\BL_1$. 

\begin{lemma}
\begin{equation}
S_i(\BL_i, \BI) = \max_{ c \in \mathcal{C}_{\mathcal{T}_i}^i } \left\{ \max_{\BL_{/i}, \BT} F (\BL_i, \BL_{/i}, \BT,   \mathcal{G}_{c} | \BI, \mathcal{T}_i) \right\}  
\end{equation}
 \label{lemma:lm1}
\end{lemma}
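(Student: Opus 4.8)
The plan is to prove the identity by structural induction on the subtree $\mathcal{T}_i$, working from the leaves toward the root, with induction variable $|\mathcal{V}(\mathcal{T}_i)|$ (equivalently the height of $\mathcal{T}_i$). For the base case I would take $i$ to be a leaf of the full graph, so that $\mathcal{K}(i) = \emptyset$ and $\mathcal{T}_i$ is the single node $\{i\}$. Then the only composition in $\mathcal{C}_{\mathcal{T}_i}^i$ is the trivial one containing just $i$, whose score $F$ reduces to the appearance term $A(\BL_i | \BI)$, since there are no edges and hence no relational, decoupling, or bias contributions. This agrees with Equation~\ref{eq:dp_score} because the empty sum over children vanishes.

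For the inductive step I would assume the claim for every child $k \in \mathcal{K}(i)$, i.e.\ $S_k(\BL_k | \BI)$ already equals the maximum of $F(\cdot, \mathcal{G}_{c} | \BI, \mathcal{T}_k)$ over $c \in \mathcal{C}_{\mathcal{T}_k}^k$ and over the remaining locations and types, with $k$ pinned at $\BL_k$. The central structural fact I would then establish is a bijection: every flexible composition $c \in \mathcal{C}_{\mathcal{T}_i}^i$ is specified uniquely by an independent choice at each child $k$ --- either the edge $(i,k)$ is cut, so the whole subtree $\mathcal{T}_k$ is absent from $c$, or the edge is kept and $c$ restricts to some $c_k \in \mathcal{C}_{\mathcal{T}_k}^k$. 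This holds precisely because $\mathcal{T}_i$ is a tree: deleting $i$ splits $\mathcal{T}_i$ into the disjoint subtrees $\{\mathcal{T}_k\}_{k \in \mathcal{K}(i)}$, so connectivity through $i$ is decided child-by-child with no interaction between distinct children.

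Given this decomposition, the score $F$ splits additively as $A(\BL_i | \BI)$ plus one independent contribution per child, since every appearance, relational, IDOD, and bias term in Equation~\ref{eq:full} is localized either at node $i$ or entirely within a single $\mathcal{T}_k$ together with its connecting edge $(i,k)$. Hence the joint maximization over $(\BL_{/i}, \BT)$ and over $c$ factorizes over children, becoming $A(\BL_i | \BI) + \sum_{k} \max\{\text{keep}, \text{cut}\}$. The keep option maximizes $R(\BL_i, \BL_k, t_{ik}, t_{ki} | \BI) + S_k(\BL_k | \BI)$ over $\BL_k, t_{ik}, t_{ki}$, which is exactly $m_{ki}^s(\BL_i | \BI)$ by the inductive hypothesis; the cut option contributes $D_{ik}(\gamma_{ik}{=}1, \BL_i | \BI) + B_{ik}$, which is exactly $m_{ki}^d(\BL_i | \BI)$. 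Taking the larger of the two is the $\max$ over $\gamma_{ik}$ in Equation~\ref{eq:dp_message}, so the per-child sum reproduces Equation~\ref{eq:dp_score} and closes the induction.

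The main obstacle I anticipate is not the factorization but the bookkeeping that verifies the cut branch of $F$ collapses exactly to $D_{ik} + B_{ik}$: when edge $(i,k)$ is decoupled, all nodes of $\mathcal{T}_k$ leave $\mathcal{V}_c$ and contribute no appearance or relational terms, while the single edge placed in $\mathcal{E}_c^d$ must account for every missing part through the bias $B_{ik} = \sum_{l \in \mathcal{V}(\mathcal{T}_k)} b_l$. I would confirm this by unrolling the recursion $B_{ij} = b_j + \sum_{k \in \mathcal{K}(j)} B_{jk}$ and checking that it equals $\sum_{l \in \mathcal{V}(\mathcal{T}_j)} b_l$, matching the definition of the decoupling bias term, so that nothing inside a removed subtree is double-counted or dropped. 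Applying the identity at the root $i = 1$, where $\mathcal{T}_1 = \mathcal{G}$, then yields the stated optimality of $S_1(\BL_1 | \BI)$ over $\mathcal{C}_{\mathcal{G}}^1$.
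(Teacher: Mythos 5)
Your proof is correct and follows essentially the same route as the paper's: induction from leaves to root, using the tree structure to decompose each composition in $\mathcal{C}_{\mathcal{T}_i}^i$ into independent per-child keep/cut choices that match Equations~\ref{eq:dp_score}--\ref{eq:dp_message_d}. You simply supply more detail than the paper does (the explicit bijection over child restrictions and the unrolling of the bias recursion $B_{ij}=\sum_{k\in\mathcal{V}(\mathcal{T}_j)}b_k$), which the paper leaves implicit.
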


\begin{proof}
We will prove the lemma by induction from leaf to root.


\noindent\textbf{Basis: } The proof is trivial when node $i$ is a leaf node.

\noindent\textbf{Inductive step: } Assume for each child $k \in \mathcal{K}(i)$ of the node $i$, the lemma holds. Since we do not consider the case that people are separated into isolated pieces, each flexible composition at node $i$ (\ie, $c \in\mathcal{C}_{\mathcal{T}_i}^i$) is composed of part $i$ and the flexible compositions from the children (\ie, $\mathcal{C}_{\mathcal{T}_k}^k, k \in \mathcal{K}(i)$) that are not decoupled. Since the graph is a tree, the best scores of the flexible compositions from each child can be computed separately, by $S_i(\BL_k, \BI), k \in \mathcal{K}(i)$ as assumed. These scores are then passed to node $i$ (Equation~\ref{eq:dp_message_s}). At node $i$ the algorithm can choose to decouple a child for better score (Equation~\ref{eq:dp_message}). Therefore, the best score at node $i$ is also computed by the algorithm. By induction, the lemma holds for all the nodes.
\end{proof}

By Lemma~\ref{lemma:lm1}, we can efficiently compute the best score for all the compositions with part $1$ present, \ie, $c \in\mathcal{C}_{\mathcal{G}}^1$, at each locations of part $1$ by dynamic programming (DP). These scores can be thresholded to generate multiple estimations with part $1$ present in an image. The corresponding configurations of locations and types can be recovered by the standard backward pass of DP until occlusion decoupling, \ie $\gamma_{ik} = 1$ in Equation~\ref{eq:dp_message}. All the decoupled parts are inferred as occluded and thus do not have location or pairwise type configurations. 

Since $\cup_{i\in\mathcal{V}} \mathcal{C}_{\mathcal{G}}^i = \mathcal{C}_{\mathcal{G}}$, we can get the best score for all the flexible compositions of the full graph $\mathcal{G}$ by computing the best score for each subset  $\mathcal{C}_{\mathcal{G}}^i, i \in \mathcal{V}$. More formally:
\begin{equation}
\begin{split}
\max_{c \in \mathcal{C}_{\mathcal{G}}, \BL, \BT} F (\BL, \BT,   \mathcal{G}_{c} | \BI, \mathcal{G}) 
= \max_{i \in \mathcal{V}} (\max_{c \in \mathcal{C}_{\mathcal{G}}^i, \BL, \BT} F (\BL, \BT,   \mathcal{G}_{c} | \BI, \mathcal{G})  )
\end{split}
\end{equation}

This can be done by repeating the DP procedure $K$ times, letting each part take its turn as the root. However, it turns out the messages on each edge only need to be computed twice, one for each direction. This allows us to implement an efficient message passing algorithm, which is of twice (instead of $K$ times) the complexity of the standard one-pass DP, to get the best score for all the flexible compositions.

\noindent\textbf{Computation:}
As discussed above, the inference is of twice the complexity of the standard one-pass DP. Moreover, the max operation over the locations $\BL_k$ in Equation~\ref{eq:dp_message_s}, which is a quadratic function of $\BL_k$, can be accelerated by the generalized distance transforms~\cite{felzenszwalb2005pictorial}. The resulting approach is very efficient, takes $O(2 T^2 LK)$ time once the image dependent terms are computed, where $T$ is the number of spatial relation types, $L$ is the total number of locations, and $K$ is the total number of parts in the model. This analysis assumes that all the pairwise spatial relationships have the same number of types, \ie, $T_{ij} = T_{ji} = T, \forall (i,j) \in \mathcal{E}$.

The computation of the image dependent terms is also efficient. They are computed over all the locations by a single DCNN. The DCNN is applied in a sliding window fashion by considering the fully-connected layers as $1\times 1$ convolutions~\cite{overfeat}, which naturally shares the computations common to overlapping regions.

\section{Learning}
\label{sec:learn}
We learn our model parameters from the images containing occluded people. The visibility of each part (or joint) is labeled, and the locations of the visible parts are annotated. We adopt a supervised approach to learn the model by first deriving the occlusion decoupling labels $\gamma_{ij}$ and type labels $t_{ij}$ from the annotations.

Our model consists of three sets of parameters: the mean relative positions $\mathbf{r} = \left\{\mathbf{r}_{ij}^{t_{ij}}, \mathbf{r}_{ji}^{t_{ji}} | (i,j) \in \mathcal{E} \right\}$ of different pairwise spatial relation types; the parameters $\boldsymbol\theta$ for the image dependent terms, \ie, the appearance terms, IDPR and IDOD terms; and the weight parameters $\mathbf{w}$(\ie, $w_i, \mathbf{w}_{ij}^{t_{ij}},w_{ij}, \mathbf{w}_{ji}^{t_{ji}},w_{ji}$), and bias parameters $\mathbf{b}$ (\ie, $b_k$). They are learnt separately by the K-means algorithm for $\mathbf{r}$, DCNN for $\boldsymbol\theta$, and linear Support Vector Machine (SVM)~\cite{cortes1995support} for $\mathbf{w}$ and $\mathbf{b}$.

\noindent\textbf{Derive Labels and Learn Mean Relative Positions:}
The ground-truth annotations give the part visibility labels $\mathbf{v}^n$, and locations $\BL^n$ for visible parts of each person instance $n \in \left\{1, \dots, N\right\}$. For each neighboring parts $(i,j) \in \mathcal{E}$, we derive $\gamma_{ij}^n=1$ if and only if part $i$ is visible but part $j$ is not, \ie, $v_i^n = 1$ and $v_j^n = 0$. Let $\mathbf{d}_{ij}$ be the relative position from part $i$ to its neighbor $j$, if both of them are visible. We cluster the relative positions over the training set $\left\{ \mathbf{d}_{ij}^n\ | v_i^n = 1, v_j^n = 1 \right\}$ to get $T_{ij}$ clusters (in the experiments $T_{ij}=8$ for all pairwise relations). Each cluster corresponds to a set of instances of part $i$ that share similar spatial relationship with its visible neighboring part $j$. Therefore, we define each cluster as a pairwise spatial relation type $t_{ij}$ from part $i$ to $j$ in our model, and the type label $t_{ij}^n$ for each training instance is derived based on its cluster index. The mean relative position $\mathbf{r}_{ij}^{t_{ij}}$ associated with each type is defined as the the center of each cluster. In our experiments, we use K-means by setting $\mbox{K}=T_{ij}$ to do the clustering.

\noindent\textbf{Parameters of Image Dependent Terms:}
After deriving the occlusion decoupling label and pairwise spatial type labels, each local image patch $\BI(\BL^n)$ centered at an annotated (visible) part location is labeled with category label $g^n \in \left\{1, \dots, K\right\}$, that indicates  which part is present, and also the type labels $\mathbf{m}_{g ^n\mathcal{N}(g^n)}^n$ that indicate its pairwise occlusion decoupling and spatial relationships with all its neighbors. In this way, we get a set of labeled patches $\left\{ \BI(\BL^n), g^n, {\mathbf{m}_{g^n \mathcal{N}(g^n)}^n} | v_{g^n}^n = 1\right\}$ from the visible parts of each labeled people, and also a set of background patches $\left\{ \BI(\BL^n), 0, 0 \right\} $ sampled from negative images, which do not contain people.

Given the labeled part patches and background patches, we train a $|\mathcal{U} |$-way DCNN classifier by standard stochastic gradient descent using softmax loss. The final $|\mathcal{U} |$-way softmax output is defined as our conditional probability distribution, \ie, $p(g, \mathbf{m}_{g \mathcal{N}(g)} | \BI(\BL_i); \boldsymbol\theta)$. See Section~\ref{sec:implement} for the details of our network.

\noindent\textbf{Weight and Bias Parameters:}
Given the derived occlusion decoupling labels $\gamma_{ij}$, we can associate each labeled pose with a flexible composition $c^n$. For the poses that is separated into several isolated compositions, we use the composition with the most number of parts. The location of each visible part in the associated composition $c^n$ is given by the ground-truth annotation, and the pairwise spatial types of it are derived above. We can then compute the model score of each labeled pose as a linear function of the parameters $\boldsymbol \beta = \left[\mathbf{w}, \mathbf{b}\right]$, so we use a linear SVM to learn these parameters:
\begin{equation*}
\begin{aligned}
& \min_{\boldsymbol\beta, \xi} 
& & \frac{1}{2}  \langle \boldsymbol \beta, \boldsymbol \beta \rangle + C \sum_{n} \xi_n \\
& \text{s.t.} 
& & \langle \boldsymbol \beta, \boldsymbol\Phi (c^n, \BI^n, \BL^n, \BT^n) \rangle + b_0 \geq 1 - \xi_n, \forall n \in \text{pos} \\
& & & \langle \boldsymbol \beta, \boldsymbol\Phi (c^n, \BI^n, \BL^n, \BT^n) \rangle + b_0 \leq -1 + \xi_n, \forall n \in \text{neg}
\end{aligned},
\label{eq:ssvm}
\end{equation*}

\noindent where $b_0$ is the scalar SVM bias, $C$ is the cost parameter, and $\boldsymbol\Phi(c^n, \BI^n, \BL^n, \BT^n)$ is a sparse feature vector representing the $n$-th example and is the concatenation of the image dependent terms (calculated from the learnt DCNN), spatial deformation features, and constants $1$s for the bias terms. The above constraints encourage the positive examples (pos) to be scored higher than 1 (the margin) and the negative examples (neg), which we mine from the negative images using the inference method described above, lower than -1. The objective function penalizes violations using slack variables $\xi_i$.

\section{Experiments}
\label{sec:experiments}
This section describes our experimental setup, presents comparison benchmark results, and gives diagnostic experiments.
\subsection{Dataset and Evaluation Metrics}
We perform experiments on the standard benchmarked dataset: ``We Are Family" Stickmen (WAF)~\cite{eichner2010we}, which contains challenging group photos, where several people often occlude one another (see Figure~\ref{fig:last}). The dataset contains 525 images with 6 people each on average, and is officially split into 350 images for training and 175 images for testing. Following \cite{chen_nips14, yang2013Pami}, we use the negative training images from the INRIAPerson dataset~\cite{dalal2005histograms} (These images do not contain people).  

We evaluate our method using the official toolkit of the dataset~\cite{eichner2010we} to allow comparison with previous work. The toolkit implements a version of occlusion-aware Percentage of Correct Parts (PCP) metric, where an estimated part is considered correctly localized if the \emph{average} distance between its endpoints (joints) and ground-truth is less than 50\% of the length of the ground-truth annotated endpoints, and an occluded body part is considered correct if and only if the part is also annotated as occluded in the ground-truth. 

We also evaluate the Accuracy of Occlusion Prediction (AOP) by considering occlusion prediction over all people parts as a binary classification problem. AOP does not care how well a part is localized, but is aimed to show the percentage of parts that have its visibility status correctly estimated. 

\subsection{Implementation detail}
\label{sec:implement}
\begin{figure}
\centering
\includegraphics[width=\linewidth,trim=5 5 5 5,clip=true]{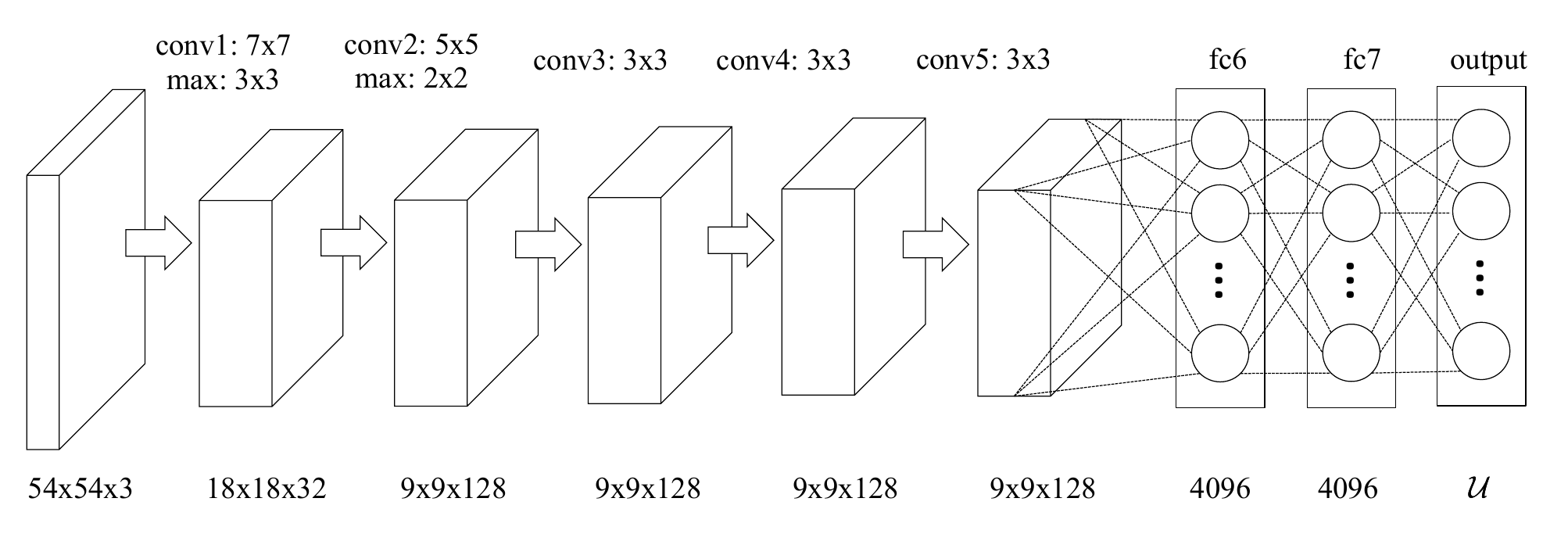}
\caption{An illustration of the DCNN architecture used in our experiments. It consists of five convolutional layers (conv), 2 max-pooling layers (max) and three fully-connected layers (fc) with a final $|\mathcal{U} |$-way softmax output. We use the rectification (ReLU) non-linearity, and the dropout technique described in \cite{krizhevsky2012imagenet}. }
\label{fig:dcnn_model}
\end{figure}

\noindent\textbf{DCNN Architecture:}
The layer configuration of our network is summarized in Figure~\ref{fig:dcnn_model}. In our experiments, the patch size of each part is $54 \times 54$. We pre-process each image patch pixel by subtracting the mean pixel value over all the pixels of training patches. We use the Caffe~\cite{jia2014caffe} implementation of DCNN. 

\noindent\textbf{Data Augmentation:}
We augment the training data by rotating and horizontally flipping the positive training examples to increase the number of training part patches with different spatial configurations with its neighbors. We follow~\cite{chen_nips14, yang2013Pami} to increase the number of parts by adding the midway points between annotated parts, which results in 15 parts on the WAF dataset. Increasing the number of parts produce more training patches for DCNN, which helps to reduce overfitting. Also covering a person with more parts is good for modeling foreshortening~\cite{yang2013Pami}.

\noindent\textbf{Part-based Non-Maximum Suppression:}
Using the proposed inference algorithm, a single image evidence of a part can be used multiple times in different estimations. This may produce duplicated estimations for the same person. We use a greedy part-based non-maximum suppression~\cite{chen_cvpr14} to prevent this. There is a score associated to each estimation. We sort the estimations by their score and start from the highest scoring estimation and remove the ones whose parts overlap significantly with the corresponding parts of any previously selected estimations. In the experiments, we require the interaction over union between the corresponding parts in different estimation to be less than 60\%.

\noindent\textbf{Other Settings:}
We use the same number of spatial relationship types for all pairs of neighbors in our experiments. They are set as $8$, \ie, $T_{ij} = T_{ji} = 8, \forall (i,j) \in \mathcal{E}$. 

\subsection{Benchmark results}
Table~\ref{tab:waf_compare} compares the performance of our method with the state of the art methods using the PCP and AOP metrics on the WAF benchmark, which shows our method improves the PCP performance on all parts, and significantly outperform the best previously published result~\cite{eichner2010we} by 11.3\% on mean PCP, and 4.9\% on AOP. Figure~\ref{fig:last} shows some estimation results of our model on the WAF dataset. 
\begin{table}
\scriptsize
\centering
\rowcolors{2}{}{gray!35}
\addtolength{\tabcolsep}{0.0pt}
\begin{tabular}{ c | c |c c c c  c }
\toprule[0.2 em] %
Method & AOP & Torso & Head & U.arms & L.arms & mPCP \\
Ours & \textbf{84.9} & \textbf{88.5} & \textbf{98.5} & \textbf{77.2} & \textbf{71.3} & \textbf{80.7} \\
\midrule
Multi-Person~\cite{eichner2010we} &  80.0 & 86.1 & 97.6 & 68.2 & 48.1 & 69.4\\
Ghiasi et. al.~\cite{Ghiasi14_people} & 74.0 & - & - & - & - & 63.6 \\
One-Person~\cite{eichner2010we} & 73.9 & 83.2 & 97.6 & 56.7 & 28.6 & 58.6 \\
\bottomrule[0.1 em]
\end{tabular}
\caption{Comparison of PCP and AOP on the WAF dataset. Our method improves the PCP performance on all parts, and significantly outperform the best previously published result~\cite{eichner2010we} by 11.3\% on mean PCP, and 4.9\% on AOP. }
\label{tab:waf_compare}
\end{table}

\begin{figure*}
\centering
\includegraphics[width=\linewidth,trim= 5 5 5 5,clip=true]{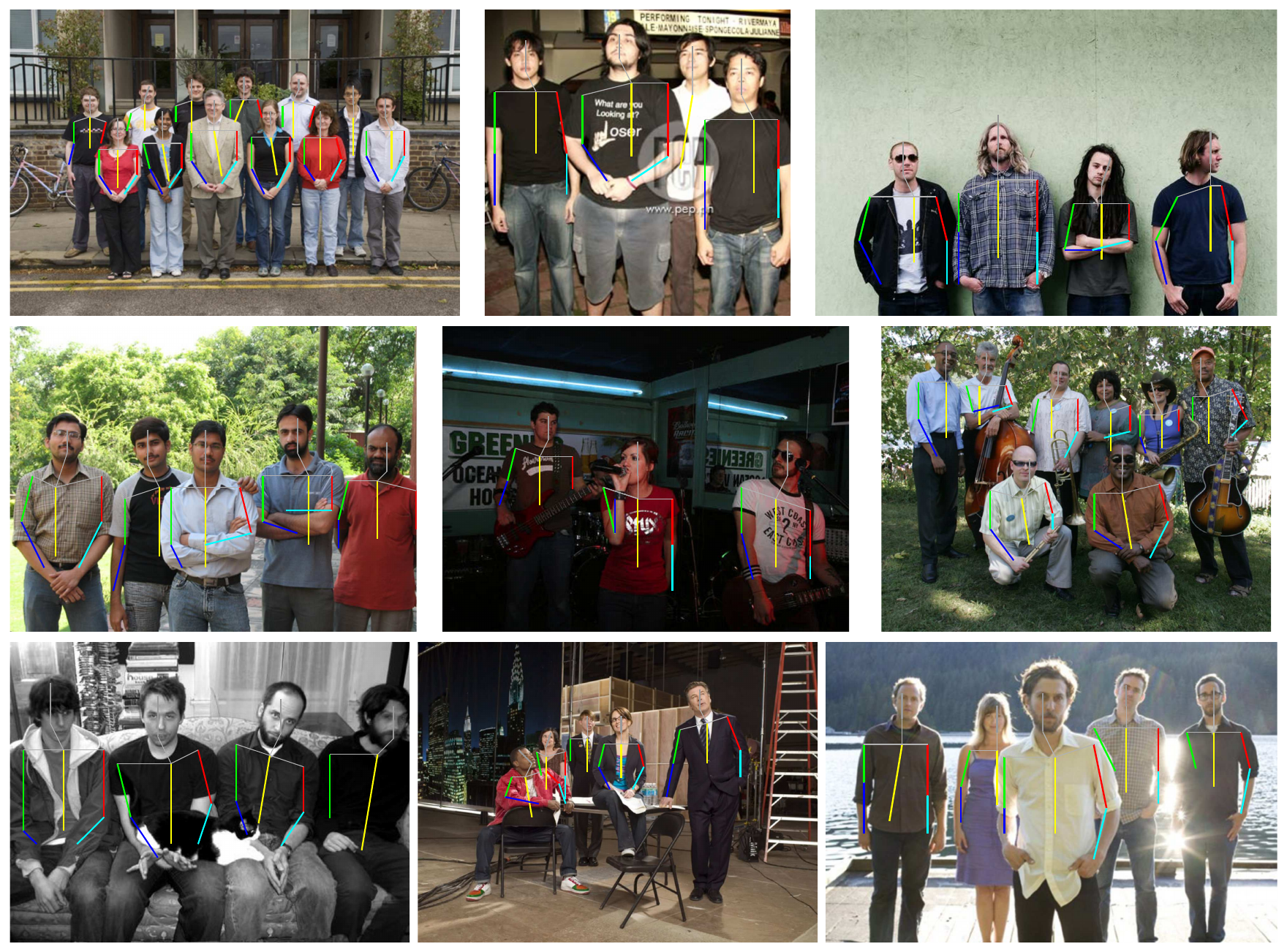}
\caption{Results on the WAF dataset. We show the parts that are inferred as visible, and thus have estimated configurations, by our model.}
\label{fig:last}
\end{figure*}

\subsection{Diagnostic Experiments}
\label{sec:diag}
\noindent\textbf{Connectivity Prior Verification:}
We analyze the test set of the WAF dataset using ground truth annotation, and find that 95.1\% of the people instances have their visible parts form a connected subtree. This verifies our connectivity prior that visible parts of a human tend to form a connected subtree, even in the presence of significant occlusion.

\noindent\textbf{Term Analysis:}
We design the following experiments to better understand each design component in our model. 

Firstly, our model is designed to handle different degrees of occlusion by efficiently searching over large number of flexible compositions. When we do not consider occlusion and use a single composition (\ie, the full graph), our model reduces to the base model~\cite{chen_nips14}. So we perform an diagnostic experiment by using the base model~\cite{chen_nips14} on the WAF dataset, which will infer every part as visible and localize them. 

Secondly, we model occlusion by combining the cue from absence of evidence for body part and local image measurement around the occlusion boundary, which is encoded in the IDOD terms. So we perform the second diagnostic experiment by removing the IDOD terms(\ie, in Equation~\ref{eq:idod}, we have $\varphi^d (\gamma_{ij}=1|. ; \boldsymbol\theta) = 0$). In this case, the model handles occlusion only by exploiting the cue from absence of evidence for body part.

We show the PCP and AOP performance of the diagnostic experiments in Table~\ref{tab:waf_diag}. As is shown, flexible compositions (\ie, \textit{FC}) significantly outperform a single composition (\ie, the base model~\cite{chen_nips14}), and adding \textit{IDOD} terms improves the performance significantly (see the caption for details).

\begin{table}
\scriptsize
\centering
\rowcolors{2}{}{gray!35}
\addtolength{\tabcolsep}{-0.1pt}
\begin{tabular}{ c | c | c c c c  c}
\toprule[0.2 em] %
Method & AOP & Torso & Head & U.arms & L.arms & mPCP \\
Base Model~\cite{chen_nips14} & 73.9 & 81.4 & 92.6 & 63.6 & 47.6 & 66.1  \\
\textit{FC} & 82.0 & 87.0 & \textbf{98.6} & 72.7 & 67.5 & 77.7 \\ 
\textit{FC}+\textit{IDOD} & \textbf{84.9} & \textbf{88.5} & 98.5 & \textbf{77.2} & \textbf{71.3} & \textbf{80.7} \\
\bottomrule[0.1 em]
\end{tabular}
\caption{Diagnostic Experiments PCP and AOP results on the WAF dataset. Using flexible compositions (\ie, \textit{FC}) significantly improves our base model~\cite{chen_nips14} by 11.6\% on PCP and 8.1\% on AOP. Adding \textit{IDOD} terms (\textit{FC}+\textit{IDODs}, \ie, the full model) further improves our PCP performance to 80.7\% and AOP performance to 84.9\%, which is significantly higher than the state of the art methods.}
\label{tab:waf_diag}
\end{table}

\section{Conclusion}
This paper develops a new graphical model for parsing people. We introduce and experimentally verify on the WAF dataset (see Section~\ref{sec:diag}) a novel prior that the visible body parts of human tend to form a connected subtree, which we define as a flexible composition, even with the presence of significant occlusion. This is equivalent to modeling people as a mixture of flexible compositions. We define novel occlusion cues and learn them from data. We show very efficient inference can be done for our model by exploiting part sharing so that computing over all the flexible compositions takes only twice that of the base model~\cite{chen_nips14}.  We evaluate on the WAF dataset and show we significantly outperform current state of the art methods~\cite{eichner2010we, Ghiasi14_people}. We also show big improvement over our base model, which does not model occlusion explicitly. 

\section{Acknowledgements}
This research has been supported by the Center for Minds, Brains and Machines (CBMM), funded by NSF STC award CCF-1231216, and the grant ONR N00014-12-1-0883. The GPUs used in this research were generously donated by the NVIDIA Corporation.

{\small
\bibliographystyle{ieee}
\bibliography{biblio}

\begin{thebibliography}{10}\itemsep=-1pt

\bibitem{azizpour2012object}
H.~Azizpour and I.~Laptev.
\newblock Object detection using strongly-supervised deformable part models.
\newblock In {\em European Conference on Computer Vision (ECCV)}, 2012.

\bibitem{BourdevMalikICCV09}
L.~Bourdev and J.~Malik.
\newblock Poselets: Body part detectors trained using 3d human pose
  annotations.
\newblock In {\em International Conference on Computer Vision (ICCV)}, 2009.

\bibitem{burgos2013robust}
X.~P. Burgos-Artizzu, P.~Perona, and P.~Doll{\'a}r.
\newblock Robust face landmark estimation under occlusion.
\newblock In {\em International Conference on Computer Vision (ICCV)}, 2013.

\bibitem{Jay_George15_ICLR}
L.-C. Chen, G.~Papandreou, I.~Kokkinos, K.~Murphy, and A.~Yuille.
\newblock Semantic image segmentation with deep convolutional nets and fully
  connected crfs.
\newblock In {\em International Conference on Learning Representations (ICLR)},
  2015.

\bibitem{chen_cvpr14}
X.~Chen, R.~Mottaghi, X.~Liu, S.~Fidler, R.~Urtasun, and A.~Yuille.
\newblock Detect what you can: Detecting and representing objects using
  holistic models and body parts.
\newblock In {\em Computer Vision and Pattern Recognition (CVPR)}, 2014.

\bibitem{chen_nips14}
X.~Chen and A.~Yuille.
\newblock Articulated pose estimation by a graphical model with image dependent
  pairwise relations.
\newblock In {\em Advances in Neural Information Processing Systems (NIPS)},
  2014.

\bibitem{cortes1995support}
C.~Cortes and V.~Vapnik.
\newblock Support-vector networks.
\newblock {\em Machine learning}, 1995.

\bibitem{coughlan2000efficient}
J.~Coughlan, A.~Yuille, C.~English, and D.~Snow.
\newblock Efficient deformable template detection and localization without user
  initialization.
\newblock {\em Computer Vision and Image Understanding}, 2000.

\bibitem{dalal2005histograms}
N.~Dalal and B.~Triggs.
\newblock Histograms of oriented gradients for human detection.
\newblock In {\em Computer Vision and Pattern Recognition (CVPR)}, 2005.

\bibitem{Dollar2012PAMI}
P.~Doll\'ar, C.~Wojek, B.~Schiele, and P.~Perona.
\newblock Pedestrian detection: An evaluation of the state of the art.
\newblock {\em PAMI}, 34, 2012.

\bibitem{eichner2010we}
M.~Eichner and V.~Ferrari.
\newblock We are family: Joint pose estimation of multiple persons.
\newblock In {\em European Conference on Computer Vision (ECCV)}, 2010.

\bibitem{felzenszwalb2010object}
P.~F. Felzenszwalb, R.~B. Girshick, D.~McAllester, and D.~Ramanan.
\newblock Object detection with discriminatively trained part-based models.
\newblock {\em IEEE Transactions on Pattern Analysis and Machine Intelligence
  (TPAMI)}, 2010.

\bibitem{felzenszwalb2005pictorial}
P.~F. Felzenszwalb and D.~P. Huttenlocher.
\newblock Pictorial structures for object recognition.
\newblock {\em International Journal of Computer Vision (IJCV)}, 2005.

\bibitem{ferrari2008progressive}
V.~Ferrari, M.~Marin-Jimenez, and A.~Zisserman.
\newblock Progressive search space reduction for human pose estimation.
\newblock In {\em Computer Vision and Pattern Recognition (CVPR)}, 2008.

\bibitem{fischler1973representation}
M.~A. Fischler and R.~A. Elschlager.
\newblock The representation and matching of pictorial structures.
\newblock {\em IEEE Transactions on Computers}, 1973.

\bibitem{Ghiasi14_face}
G.~Ghiasi and C.~C. Fowlkes.
\newblock Occlusion coherence: Localizing occluded faces with a hierarchical
  deformable part model.
\newblock In {\em Computer Vision and Pattern Recognition (CVPR)}, 2014.

\bibitem{Ghiasi14_people}
G.~Ghiasi, Y.~Yang, D.~Ramanan, and C.~C. Fowlkes.
\newblock Parsing occluded people.
\newblock In {\em Computer Vision and Pattern Recognition (CVPR)}, 2014.

\bibitem{girshick2014rich}
R.~Girshick, J.~Donahue, T.~Darrell, and J.~Malik.
\newblock Rich feature hierarchies for accurate object detection and semantic
  segmentation.
\newblock In {\em Computer Vision and Pattern Recognition (CVPR)}, 2014.

\bibitem{girshick2011object}
R.~B. Girshick, P.~F. Felzenszwalb, and D.~A. Mcallester.
\newblock Object detection with grammar models.
\newblock In {\em Advances in Neural Information Processing Systems (NIPS)},
  2011.

\bibitem{hsiao2012occlusion}
E.~Hsiao and M.~Hebert.
\newblock Occlusion reasoning for object detection under arbitrary viewpoint.
\newblock In {\em Proceedings of IEEE Conference on Computer Vision and Pattern
  Recognition (CVPR)}. IEEE, 2012.

\bibitem{jia2014caffe}
Y.~Jia, E.~Shelhamer, J.~Donahue, S.~Karayev, J.~Long, R.~Girshick,
  S.~Guadarrama, and T.~Darrell.
\newblock Caffe: Convolutional architecture for fast feature embedding.
\newblock {\em arXiv preprint arXiv:1408.5093}, 2014.

\bibitem{Johnson10}
S.~Johnson and M.~Everingham.
\newblock Clustered pose and nonlinear appearance models for human pose
  estimation.
\newblock In {\em British Machine Vision Conference (BMVC)}, 2010.

\bibitem{kanizsa1979organization}
G.~Kanizsa.
\newblock {\em Organization in vision: Essays on Gestalt perception}.
\newblock Praeger New York, 1979.

\bibitem{krizhevsky2012imagenet}
A.~Krizhevsky, I.~Sutskever, and G.~E. Hinton.
\newblock Imagenet classification with deep convolutional neural networks.
\newblock In {\em Advances in Neural Information Processing Systems (NIPS)},
  2012.

\bibitem{pishchulin2013poselet}
L.~Pishchulin, M.~Andriluka, P.~Gehler, and B.~Schiele.
\newblock Poselet conditioned pictorial structures.
\newblock In {\em Computer Vision and Pattern Recognition (CVPR)}, 2013.

\bibitem{sapp2010adaptive}
B.~Sapp, C.~Jordan, and B.~Taskar.
\newblock Adaptive pose priors for pictorial structures.
\newblock In {\em Computer Vision and Pattern Recognition (CVPR)}, 2010.

\bibitem{sapp2013modec}
B.~Sapp and B.~Taskar.
\newblock Modec: Multimodal decomposable models for human pose estimation.
\newblock In {\em Computer Vision and Pattern Recognition (CVPR)}, 2013.

\bibitem{overfeat}
P.~Sermanet, D.~Eigen, X.~Zhang, M.~Mathieu, R.~Fergus, and Y.~LeCun.
\newblock Overfeat: Integrated recognition, localization and detection using
  convolutional networks.
\newblock In {\em International Conference on Learning Representations (ICLR)},
  2014.

\bibitem{tompson2014joint}
J.~J. Tompson, A.~Jain, Y.~LeCun, and C.~Bregler.
\newblock Joint training of a convolutional network and a graphical model for
  human pose estimation.
\newblock In {\em Advances in Neural Information Processing Systems (NIPS)},
  2014.

\bibitem{tu2006parsing}
Z.~Tu and S.-C. Zhu.
\newblock Parsing images into regions, curves, and curve groups.
\newblock {\em International Journal of Computer Vision}, 2006.

\bibitem{wang2013approach}
C.~Wang, Y.~Wang, and A.~L. Yuille.
\newblock An approach to pose-based action recognition.
\newblock In {\em Computer Vision and Pattern Recognition (CVPR)}, 2013.

\bibitem{yang2013Pami}
Y.~Yang and D.~Ramanan.
\newblock Articulated human detection with flexible mixtures of parts.
\newblock {\em IEEE Transactions on Pattern Analysis and Machine Intelligence
  (TPAMI)}, 2013.

\bibitem{yao2012coupled}
A.~Yao, J.~Gall, and L.~Van~Gool.
\newblock Coupled action recognition and pose estimation from multiple views.
\newblock {\em International journal of computer vision}, 2012.

\bibitem{zhang2014panda}
N.~Zhang, M.~Paluri, M.~Ranzato, T.~Darrell, and L.~Bourdev.
\newblock Panda: Pose aligned networks for deep attribute modeling.
\newblock In {\em Computer Vision and Pattern Recognition (CVPR)}, 2014.

\bibitem{zhu2008max}
L.~Zhu, Y.~Chen, Y.~Lu, C.~Lin, and A.~Yuille.
\newblock Max margin and/or graph learning for parsing the human body.
\newblock In {\em Computer Vision and Pattern Recognition (CVPR)}, 2008.

\bibitem{zhu2010part}
L.~Zhu, Y.~Chen, A.~Torralba, W.~Freeman, and A.~Yuille.
\newblock Part and appearance sharing: Recursive compositional models for
  multi-view.
\newblock In {\em Computer Vision and Pattern Recognition (CVPR)}, 2010.

\bibitem{zhu2006stochastic}
S.-C. Zhu and D.~Mumford.
\newblock A stochastic grammar of images.
\newblock {\em Foundations and Trends in Computer Graphics and Vision}, 2006.

\end{thebibliography}
}
\end{document}